\theoremstyle{plain}
\newtheorem{theorem}{Theorem}[section]
\theoremstyle{definition}
\theoremstyle{remark}
\newcommand{\etal}{\textit{et al}.}
\newcommand{\ie}{\textit{i}.\textit{e}.}
\newcommand{\etc}{\textit{etc}.}
\icmltitlerunning{ExCP: Extreme LLM Checkpoint Compression via Weight-Momentum Joint Shrinking}
\begin{document}
	\begin{CJK}{UTF8}{gbsn}
	
\twocolumn[
\icmltitle{ExCP: Extreme LLM Checkpoint Compression via Weight-Momentum Joint Shrinking}

\begin{icmlauthorlist}
\icmlauthor{Wenshuo Li}{yyy}
\icmlauthor{Xinghao Chen}{yyy}
\icmlauthor{Han Shu}{yyy,sch}
\icmlauthor{Yehui Tang}{yyy}
\icmlauthor{Yunhe Wang}{yyy}
\end{icmlauthorlist}

\icmlaffiliation{yyy}{Huawei Noah’s Ark Lab}
\icmlaffiliation{sch}{University of Science and Technology of China}

\icmlcorrespondingauthor{Xinghao Chen}{xinghao.chen@huawei.com}
\icmlcorrespondingauthor{Yunhe Wang}{yunhe.wang@huawei.com}

\icmlkeywords{Machine Learning, ICML}

\vskip 0.3in
]

\printAffiliationsAndNotice{} %

\begin{abstract}
Large language models (LLM) have recently attracted significant attention in the field of artificial intelligence.
However, the training process of these models poses significant challenges in terms of computational and storage capacities, thus compressing checkpoints has become an urgent problem.
In this paper, we propose a novel Extreme Checkpoint Compression (ExCP) framework, which significantly reduces the required storage of training checkpoints while achieving nearly lossless performance. 
We first calculate the residuals of adjacent checkpoints to obtain the essential but sparse information for higher compression ratio.
To further excavate the redundancy parameters in checkpoints, we then propose a weight-momentum joint shrinking method to utilize another important information during the model optimization, \ie, momentum.
In particular, we exploit the information of both model and optimizer to discard as many parameters as possible while preserving critical information to ensure optimal performance.
Furthermore, we utilize non-uniform quantization to further compress the storage of checkpoints. 
We extensively evaluate our proposed ExCP framework on several models ranging from 410M to 7B parameters and demonstrate significant storage reduction while maintaining strong performance. For instance, we achieve approximately $70\times$ compression for the Pythia-410M model, with the final performance being as accurate as the original model on various downstream tasks. Codes will be available at https://github.com/Gaffey/ExCP.
\end{abstract}

\section{Introduction}
\label{sec:intro}
Large Language Model (LLM)~\cite{gpt3, llama, wangpangu, palm, team2023gemini} has attracted the attention of the vast majority of academia and industry concentrated on Artificial Intelligence (AI). The current LLM can conduct daily conversations with humans, ask questions and answer questions, help humans extract information from articles and charts, and even complete professional-related tasks such as consultation and programming, which greatly improves the efficiency of human-computer interaction. Thousands of laboratories and companies are involved in the training of the LLMs. Computing power and storage have become key resources in the LLM era. Training an LLM requires up to thousands of GPUs or computing cards like TPUs or Ascends, and it is difficult to keep such a large computing cluster running completely smoothly. At the same time, researchers are also faced with the need to interrupt training at any time to adjust training data and hyperparameters. Sometimes it is even necessary to go back to earlier checkpoints to solve problems introduced during training. Therefore, frequent saving of checkpoints has become a must during the whole training process.

\begin{figure*}[th]
\centering
\vspace{0pt}
\includegraphics[width=6.8in]{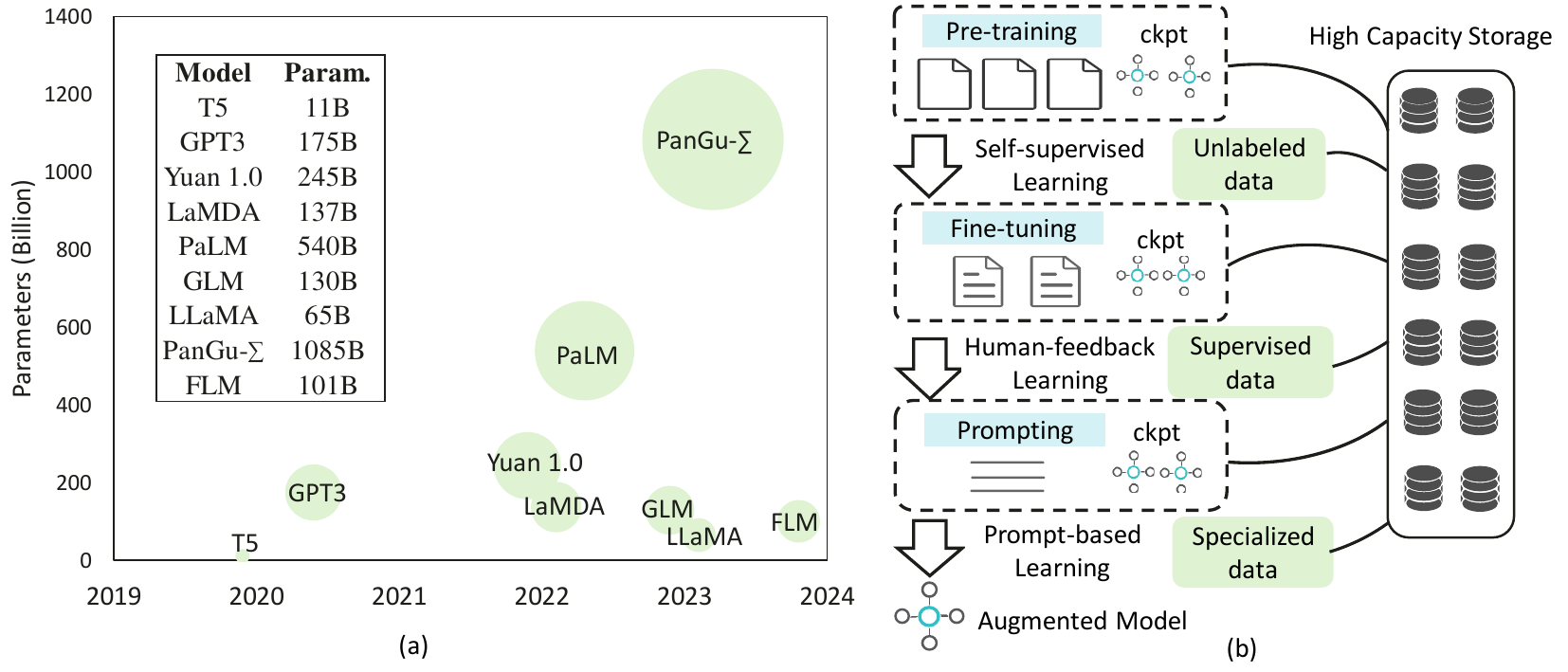}
\vspace{-8mm}
\caption{The number of parameters of some LLMs and the general training process of LLMs.  (a) Parameters of some recent LLMs, most of them contain billions of weights and keep getting larger in trend. (b) The training of LLMs consists of several stages with variety of schemes and data.  A large quantity of checkpoints would be stored in each stage. Considering the magnitude of LLMs’ parameters, extremely high capacity storage is needed for training of LLMs, which could cost tens of millions of dollars.}
\vspace{-5mm}
\label{fig:motivation}
\end{figure*}

Take the open source model Pythia~\cite{biderman2023pythia} as an example, the checkpoint of the largest version Pythia-12B model takes more than 24GB to save. Not to mention the relevant momentum states of the optimizer. Adam optimizer requires twice the storage space of the weight. The training process of Pythia-12B saves 154 checkpoints which requires about 11TB storage, which would cost \$5000 a month on a general cloud server to store these checkpoints. And this is just an entry-level scenario for large company. Conservative estimates suggest that the largest models of the most advanced LLMs, such as the GPT series and Gemini series, has the number of parameters on the order of hundreds to thousands of billions. Some publicly available data is shown Figure~\ref{fig:motivation}. Larger models also require more checkpoints and longer training time. So the total cost of storage for a cutting-edge LLM may grow to tens of millions of dollars.

In view of the above problems, compressing model checkpoints has become a very urgent need. Model compression itself is not a new topic. The model size is compressed to reduce the storage occupied by checkpoints~\cite{han2015deep, hu2020delta, eisenman2022check, chen2020efficient, jin2023design, dynaquant} or compress the calculation amount of the model to improve the model's inference performance~\cite{networkslimming,tang2020scop, dettmers2022llm, xiao2023smoothquant, chen2022mtp, shu2023tinysam,wu2023ppt}. These researches have drawn attentions of researchers in the past ten years. However, previous checkpoints compression work concerns more about the size of weight checkpoints instead of the whole training states, so there is a lack of relevant researches on momentum states compression. In addition, the similarity of adjacent checkpoints should also be considered in the compression pipeline. This feature can improve the pruning ratio instead of simply reducing the final size using some encoding techniques.

In this paper, we propose a checkpoints compression framework that does not rely on training code and information. We calculate the residual value of adjacent checkpoints, apply weight-momentum joint pruning, and then non-uniformly quantize the weights and momentum states to extremely compress the checkpoints. Meanwhile, our residual compression strategy ensures that we can resume the training from compressed checkpoints nearly lossless. Our main contributions are as follows:
\begin{itemize}
    \item We propose a checkpoints compression framework which contains residual calculation, weight-momentum joint pruning and non-uniform quantization. This framework makes full use of the characteristics of checkpoints compression, achieving almost lossless training recovery while achieving a high compression ratio. 
    \item We derive a weight-momentum joint pruning method, and prove the convergence of the optimizer under this pruning method. This is the first work to our knowledge that jointly considers both weights and momentum states pruning.
    \item We conduct experiments on various models and evaluation benchmarks. Our compressed model achieves up to 70$\times$ nearly lossless compression on Pythia-410M model, which could largely reduce the storage of saving checkpoints.
\end{itemize}

\section{Related Work}
\label{sec:related-work}

\subsection{Large Language Model}
Recently, the emergence of large language model and the corresponding strong capabilities in various natural language processing (NLP) applications have drawn widespread attention in research society. The demonstrated powerful abilities by the model scaling have furthermore increased the parameters of large language models. The remarkable work GPT3~\cite{gpt3} shows impressive performance on solving real-world NLP tasks. However, as shown in Table~\ref{table:ckpt_size}, the model contains 175 billion parameters and requires large amount of hardware resources to be trained and stored. A single training checkpoint of GPT3 can reach up to 2.3TB. Following large language models such as PaLM~\cite{palm} and LLaMA~\cite{llama} consume comparable or even more hardware resources. Due to the huge resource consumption and common training failures, the checkpoints of LLMs should be updated and stored frequently, which could occupy much more resources of the communication bandwidth and storage devices. Thus, the exploration of redundancy in LLM checkpoint is meaningful and necessary, which can save the memory consumption in great extent and make the training procedure more efficient and more affordable.

\begin{table}[t]
	\vspace{-5pt}
	\caption{The parameter and checkpoint size of part LLMs. High-capacity storage devices are essential for checkpoints for LLM training process.}
	\centering
	\begin{tabular}{c|c|c}
		\toprule
		Model & Param. & Storage   \\ \midrule
		GPT3~\cite{gpt3} &  175B       &      2.3TB   \\  %
		PaLM~\cite{palm} &  540B       &       $\sim$7TB    \\  %
		LLaMA-70B~\cite{llama} &  75B       &       1.0TB   \\  %
		PanGu-$\pi$ ~\cite{wangpangu} &  7B       &       99GB   \\  \bottomrule
	\end{tabular}
	\label{table:ckpt_size}
	\vspace{-7mm}
\end{table}

\subsection{Compression Method}
\noindent\textbf{Data Compression Methods.}
Compression methods for efficient storage of data have been investigated for long decades.  These previous methods can be categorized into two types, the lossy and the lossless. Lossy compression methods like JPEG~\cite{coding_techniques} and MP3~\cite{mp3} are widely used in the compression of image and video data which does not require precise restoration. Huffman coding~\cite{huffman_coding} is a classic lossless compression method, which statisticizes the frequency of the characters to get an optimized coding length according to different frequency of occurrence. The lossless compression method can be easily applied to the checkpoints of LLMs. However, the generalizability of the data compression method determines that the compression rate would be relatively low when applied to LLM checkpoints. Specialized compression method should be investigated and designed to achieve a higher compression rate for heavy intrinsic redundancy of LLM checkpoint.  

\noindent\textbf{Neural Network Compression Methods.}
The neural network compression methods have been explored by many work due to the increasing model size and computation resources. DeepCompression~\cite{han2015deep} utilizes network pruning, quantization and huffman coding to obtain a compact neural network. Llm~\cite{dettmers2022llm} and Smoothquant~\cite{xiao2023smoothquant} adopt the quantization to compress the large language models. These network compression methods could reduce the quantity or bit-width of parameters in neural networks but are often highly related to the training targets. Thus, these methods cannot be generally applied to compress the checkpoints with various task background. Moreover, re-training or finetuning is often necessary for compression methods like network pruning~\cite{networkslimming} and quantization-aware training~\cite{qat}, which could be extremely computationally expensive especially for large language models with huge training data and huge amount of network parameters.  

\noindent\textbf{Compression methods for checkpoints.}
As the deep neural network model getting larger and the training cost getting more expensive, some research work begin to focus on the compression of checkpoints. LC-Checkpoint~\cite{chen2020efficient} proposes a lossy compression scheme for checkpoint constructions on the assumption of SGD optimizer. Check-N-Run~\cite{eisenman2022check} applies differential and quantization for recommendation models. Delta-DNN ~\cite{hu2020delta} focuses on the storage of floating point numbers and records the differential of two neighboring versions. QD-Compressor~\cite{jin2023design} further develops a layer-based quantization and achieves higher compression ratio. When these methods applied on large-scale models of LLM, undesirable accuracy degradation would occur due to the uniform and constant quantization strategy during training procedure. Recent DynaQuant~\cite{dynaquant} tackles this issue by precisely compressing model parameters based on different contributions to the final result quality with an efficient dynamic quantization configuration and a quantization-aware delta encoding scheme. However, most of previous work focus on the compression of model parameters while ignoring the momentum states of optimizer, which occupy more memory storage and exist more redundancy in LLM checkpoints.

\section{Our Method}
\label{sec:method}

\begin{figure*}[t]
\centering
\vspace{-5pt}
\includegraphics[width=\linewidth]{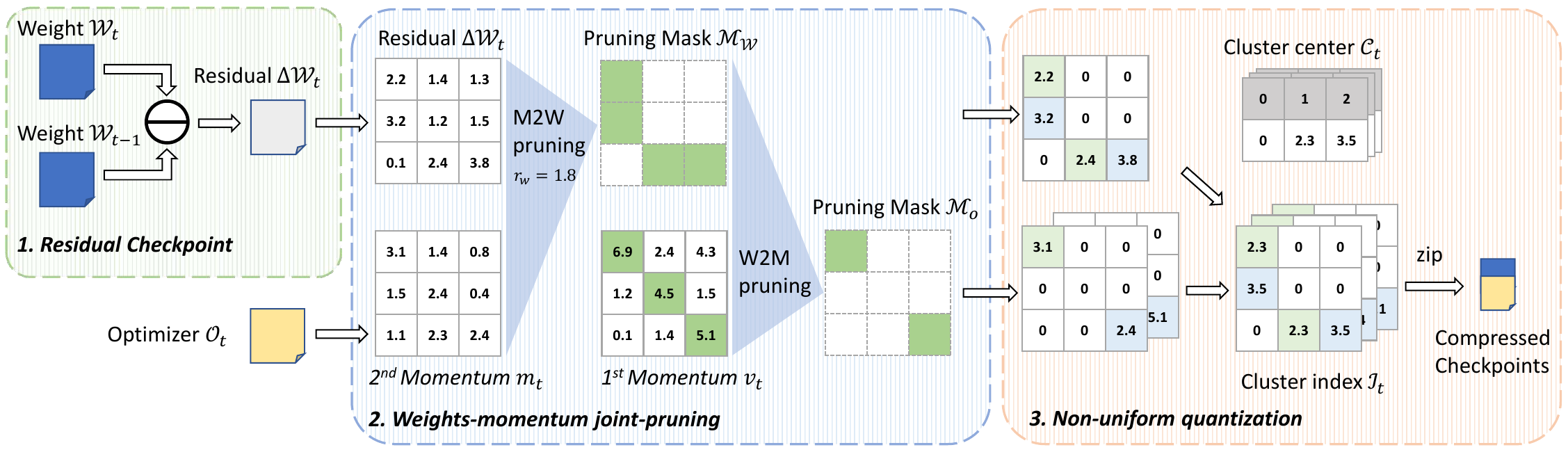}
\vspace{-6mm}
\caption{Framework of our proposed compression process. We calculate the residual $\Delta \mathcal{W}_t$ and apply joint-pruning on $\Delta \mathcal{W}_t$ and $\mathcal{O}_t$. Then we quantize them separately and save the cluster center $\mathcal{C}_t$ and cluster index $\mathcal{I}_t$.}
\vspace{-8pt}
\label{fig:framework}
\end{figure*}

A checkpoint $\mathcal P_t$ of a neural network during the $t^{th}$ training iteration generally contains the model weights $\mathcal W_t$ and parameters $\mathcal O_t$ of the optimizer momentum.
\begin{equation}\label{eq:ckpt}
	\mathcal P_t = \{\mathcal W_t, \mathcal O_t\}.
\end{equation}
Saving checkpoints for $T$ times during training leads to a series of checkpoints $\mathcal P$.
\begin{equation}\label{eq:ckpt_all}
	\mathcal P = \{\mathcal P_1, \mathcal P_2, \cdots, \mathcal P_t\, \cdots, \mathcal P_T\}.
\end{equation}
For the widely used Adam optimizer, the parameters with most significant storage cost are the first-order and second-order moments $v_t$, $m_t$, \ie, 
\begin{equation}\label{eq:ckpt_adam}
	\mathcal O_t = \{v_t, m_t\}.
\end{equation}
Note that some variables such as learning rate and weight decay \etc\ are also stored in the optimizer checkpoint, but can be simply neglected when compared with moments.

In the traditional pruning-related work, researchers only concern about the weights of models, since the main purpose of pruning is reducing the overhead of calculation and storage during the inference stage. However, when we turn to the checkpoint compression during the training process, the pruning of momentum is also important to reduce the total size of training checkpoints. Take the most general optimizer Adam used in LLM training as an example, it saves the first-order and second-order moment of gradients which require double storage of weights. Therefore, we have to take both model weights and optimizer momentum states into consideration for extreme compression of model training checkpoints.

\subsection{Residual Checkpoint}
\label{ssec:dyna-res}
During the $t^{th}$ iteration of training, since we have already stored $t-1$ checkpoints\footnote{We give a detailed description about how to deal with previous checkpoints saved in residual format in Section~\ref{ssec:comp-recon}} during previous training period, it is important to jointly utilize the temporal information of checkpoints to obtain more compact storage. The model weights will be updated upon previous ones according to the gradient during training, thus the difference between adjacent model weights is mostly to be sparse, which is more suitable for compression. In contrast, the momentum states stored in the optimizer checkpoints are the moving average of the first-order and second-order moments, which are only weakly related to the parameters in the previous checkpoint after updating for hundreds to thousands of steps, espectially for the first-order moment whose general $\beta_1=0.9$. So we do not apply residual calculation on optimizer momentum. The residual checkpoint $\Delta \mathcal{P}_t$ is defined as 
\begin{equation}\label{eq:ckpt_res}
	\Delta \mathcal P_t = \{\Delta \mathcal W_t, \mathcal O_t\} = \{\mathcal W_t-\mathcal W_{t-1}, \mathcal O_t\}.
\end{equation}

There is a comparison between the pruning on residual checkpoint and pruning on original checkpoint in Figure~\ref{fig:dist-weight}. We plot the histogram of the original weights, weights after direct pruning and weights after pruning on residual checkpoints. We find that pruning the residual checkpoint has almost no impact on the parameter distribution. This helps us to further prune the parameters.

\begin{figure*}[th]
\centering
\vspace{-5pt}
\includegraphics[width=6in]{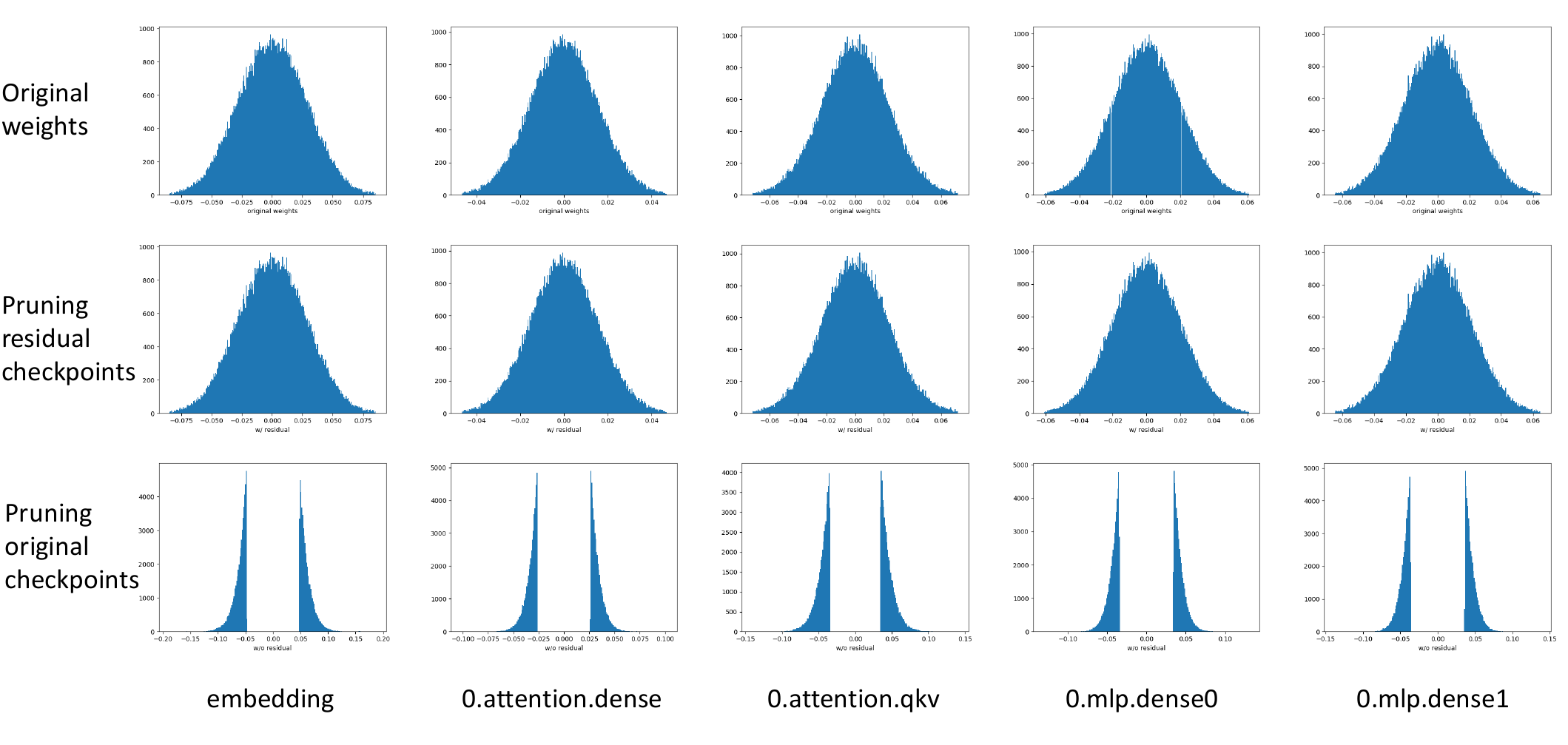}
\vspace{-15pt}
\caption{Weights distribution for original weights, pruning on residual checkpoints and pruning on original weights. We plot the histogram of random 100k non-zero weights of each case for clarity. The range of bins are bounded by (mean - 3 * std, mean + 3 * std) and 256 bins are used.}
\vspace{-10pt}
\label{fig:dist-weight}
\end{figure*}

\subsection{Joint Weight-Momentum Pruning}
\label{ssec:joint-pruning}

Weight pruning is a common way to discard unimportant values while maintaining the performance to the maximum extent. For the checkpoint compression, we need to obtain the corresponding pruning masks for model weights and momentum states, which are denoted as $\mathcal{M}_w$ and $\mathcal{M}_o$, respectively.
Intuitive way for pruning model weights and momentum states is to discard values with some pre-defined metric. However, this separate strategy leads to sub-optimal solution since there are strong relations between model weights and momentum states. Therefore, in this paper we propose a novel joint weight-momentum pruning method that obtains better performance for checkpoint compression.

\textbf{Weight Pruning.} For weights pruning, using the magnitude or the gradients of weights as an indicator is the common practice. There is a little difference between our weight pruning task and the general one. As we introduced in Section~\ref{ssec:dyna-res}, we need to prune the residual values of weights of two adjacent checkpoints instead of the original value of weights. Thus we recommend to use the second-order moment of gradients of weights as an indicator, since they can represent the statistical average of the weight change during training process. We use the indicator to calculate the pruning threshold of each layer and the formula is shown below,
\begin{equation}
\label{eqa:rw}
	r_w = \frac{\alpha}{\sqrt{m_t}} \times \text{median}(\mathcal{W}),\\
	\mathcal{M}_w(i) = \mathds{1}_{w_{t}(i) > r_w}.
\end{equation}
In which $\mathcal{W}$ and $m_t$ represents the weights and the second-order moment, respectively. $\alpha$ is a hyperparameter. After determining the pruning threshold of each layer, we prune the residual of weights to zero by magnitude for each layer.

\textbf{Momentum Pruning.} For momentum pruning, we use the first-order moment as an indicator to determine whether to prune this momentum states or not. We give a brief proof in the next section to explain why we choose it.  Besides, if a specific location of weights is pruned, intuitively it is not important to preserve the corresponding momentum states. We prune the parameters of momentum following the formula below, in which $\beta$ is a hyper-parameter.
\begin{equation}
\label{eqa:ro}
	r_o = \beta \times \text{mean}(v_t),\\
	\mathcal{M}_o(i) = \mathds{1}_{v_{t}(i) > r_o~\text{and}~\mathcal{M}_w(i)=1}.
\end{equation}

\textbf{Convergence Analysis.} Since we prune both the model weights and momentum states during training, it is important to analyze whether the whole training with checkpoint compression still have convergence guarantee.

\begin{theorem}\label{theorem}
	According the convergence analysis in Adam~\cite{kingma2014adam}, assume that the function $f_t$ has bounded gradients, $\left\|\nabla f_t(\theta)\right\|_2 \leq G,\left\|\nabla f_t(\theta)\right\|_{\infty} \leq$ $G_{\infty}$ for all $\theta \in R^d$ and distance between any $\theta_t$ generated by Adam is bounded, $\left\|\theta_n-\theta_m\right\|_2 \leq D$, $\left\|\theta_m-\theta_n\right\|_{\infty} \leq D_{\infty}$ for any $m, n \in\{1, \ldots, T\}$, and $\beta_1, \beta_2 \in[0,1)$ satisfy $\frac{\beta_1^2}{\sqrt{\beta_2}}<1$. Let $\alpha_t=\frac{\alpha}{\sqrt{t}}$ and $\beta_{1, t}=\beta_1 \lambda^{t-1}, \lambda \in(0,1)$. If we prune the moments with a mask $\mathcal{M}_o\in \{0,1\}$ at iteration $\tau$, 
	Adam could also achieves the following guarantee, for all $T \geq 1$.
	\begin{equation}
		\small
		\begin{aligned}\label{bound}
			\tilde R(T) 
			\leq & \frac{D^2}{2 \alpha\left(1-\beta_1\right)} \sum_{i=1}^d \sqrt{T \widehat{v}_{T, i}}\\
			&+\frac{\alpha\left(1+\beta_1\right) G_{\infty}}{\left(1-\beta_1\right) \sqrt{1-\beta_2}(1-\gamma)^2} \sum_{i=1}^d\left\|g_{1: T, i}\right\|_2 \\
			&+\frac{D_{\infty}^2 G_{\infty} \sqrt{1-\beta_2}}{2 \alpha} \sum_{i=1}^d \sum_{t=1}^t \frac{\beta_{1, t}}{\left(1-\beta_{1, t}\right)} \sqrt{t}\\
			& +\frac{D^2}{2 \alpha\left(1-\beta_1\right)} \sum_{i=1}^d (\sqrt{T \widehat{v}_{\tau, i}} - \sqrt{T \widehat{v}_{\tau, i}\mathcal{M}_o})
		\end{aligned}
	\end{equation}
\end{theorem}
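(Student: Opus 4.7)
The plan is to reuse the regret decomposition of Kingma and Ba and insert a perturbation accounting for the single pruning event at iteration $\tau$. First, I would invoke convexity to write $f_t(\theta_t)-f_t(\theta^*)\le \langle g_t,\theta_t-\theta^*\rangle$ and substitute the Adam update $\theta_{t+1}=\theta_t-\alpha_t \hat m_t/\sqrt{\hat v_t}$ into each coordinate of the inner product. After bounding the cross terms via Young's inequality exactly as in the original proof, the per-iteration regret reduces to a telescoping piece of the form $\frac{\sqrt{\hat v_{t,i}}}{2\alpha_t}\bigl((\theta_{t,i}-\theta^*_i)^2-(\theta_{t+1,i}-\theta^*_i)^2\bigr)$ plus two residuals that, when summed over $t\in\{1,\dots,T\}$ and $i\in\{1,\dots,d\}$ and combined with the bounded-distance assumption $\|\theta_m-\theta_n\|_2\le D$, give the second and third summands of the claimed bound. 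The unperturbed telescoping collapses to the first summand $\frac{D^2}{2\alpha(1-\beta_1)}\sum_i \sqrt{T\,\hat v_{T,i}}$.

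Next, I would handle the pruning event. Splitting the telescoping sum at $t=\tau$ into a pre-pruning block and a post-pruning block, the pre-pruning block ends with a boundary term involving $\sqrt{\hat v_{\tau,i}}$, while the post-pruning block begins from the masked moment and produces a boundary term with $\sqrt{\hat v_{\tau,i}\mathcal M_o(i)}$ in its place. These boundary terms do not cancel; their difference, after applying the distance bound, is exactly $\frac{D^2}{2\alpha(1-\beta_1)}\sum_i\bigl(\sqrt{T\,\hat v_{\tau,i}}-\sqrt{T\,\hat v_{\tau,i}\mathcal M_o(i)}\bigr)$, which is the fourth summand. The other two residual terms are unaffected by masking because their derivation uses only the coordinatewise bounds $|g_{t,i}|\le G_\infty$ and $|m_{t,i}|\le G_\infty/(1-\beta_1)$, and masking either moment can only preserve or tighten these.

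The hard part is ensuring that the post-pruning telescoping retains the correct sign on coordinates where $\mathcal M_o(i)=0$. After the reset, $\hat v_{t,i}$ for $t>\tau$ rebuilds from post-$\tau$ gradients only and is no longer guaranteed to dominate its unpruned counterpart, which could flip the sign of one of the residuals or force additional error terms into the bound. I would resolve this by observing that within the post-pruning segment $v_t$ is still non-decreasing in accumulated squared gradients, so the internal telescoping remains valid; all the discrepancy concentrates in the single boundary mismatch at $\tau$, giving the clean additive correction above. A secondary care point is the simultaneous action of the mask on $m_\tau$, which merely zeroes out the corresponding first-moment contributions at $\tau$ and cannot enlarge any of the first three terms, so no further corrections are needed.
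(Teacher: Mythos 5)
Your proposal follows essentially the same route as the paper's own proof: both start from the Kingma--Ba regret decomposition, replace $\widehat{v}_{\tau}$ (and $m_\tau$) by their masked versions in the telescoping sum, and isolate the single boundary mismatch at iteration $\tau$ as the additive correction $\frac{D^2}{2\alpha(1-\beta_1)}\sum_i(\sqrt{T\widehat{v}_{\tau,i}}-\sqrt{T\widehat{v}_{\tau,i}\mathcal{M}_o})$. Your attempt is correct at the same level of rigor as the paper (both inherit the known caveats of the original Adam analysis, which the paper acknowledges), and your explicit treatment of the post-pruning telescoping is a point the paper simply glosses over rather than a different argument.
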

Compared with the original convergence analysis of Adam~\cite{kingma2014adam}, the regret bound for our checkpoint compression method has an additional term:
\begin{equation}
	\begin{aligned}
		\Delta\tilde R(T)& = \frac{D^2}{2 \alpha\left(1-\beta_1\right)} \sum_{i=1}^d (\sqrt{T \widehat{v}_{\tau, i}} - \sqrt{T \widehat{v}_{\tau, i}\mathcal{M}_o})\\
		&= \frac{D^2}{2 \alpha\left(1-\beta_1\right)} \sum_{i=1}^d (\sqrt{T \widehat{v}_{\tau, i}(1-\mathcal{M}_o)} ). 
	\end{aligned}
\end{equation}
Since we only prune the values that $v$ is relatively small, thus the regret bound is quite close to that of original training process.

Similar to the original optimization process of Adam, the average regret of our method also converges. Denote the regret bound of original Adam as $R(T)$, thus we have
\begin{equation}
	\lim _{T \rightarrow \infty} \frac{\tilde R(T)}{T} \le \lim _{T \rightarrow \infty} \frac{ R(T)+\Delta\tilde R(T)}{T}=0
\end{equation}
Therefore, our pruning method for momentum also achieves the following guarantee for all $T\ge1$:
\begin{equation}
	\frac{R(T)}{T}=O\left(\frac{1}{\sqrt{T}}\right)
\end{equation}
This indicates that our method also has good convergence rate as that of training without checkpoint compression. Detailed analysis can be found in Appendix~\ref{app:conv}.

It should be noted that Sashank~\etal~\cite{sashank2018convergence} point out the potential problem with the proof of convergence in~\cite{kingma2014adam} and tremendous efforts~\cite{shi2021rmsprop,zhang2022adam,defossez2020simple} have been taken for better analysis of convergence of Adam algorithm. In the above section, we provide an proof-of-concept analysis for the convergence of our checkpoint compression framework. Incorporating newer convergence analysis of Adam into our framework should be feasible since we only modify the moments in a specific iteration. Extensive experiments also demonstrate the convergence of our proposed method.

\subsection{Quantization}
\label{ssec:quant}
Besides pruning, quantization is also a common used method to compress the models or reduce the overhead of calculations. In our task, we only concern about the size of checkpoint, so we can choose non-uniform quantization method, which has better compression ratio but cannot accelerate the inference process.

Shown in Figure~\ref{fig:framework}, we quantize weights and momentum states separately. We leave the pruned weights or momentum states to zero, and apply K-means algorithm on other weights or momentum states to cluster them to $2^n-1$ cluster centers. Then we save the cluster centers $\mathcal{C}_t$ and cluster index $\mathcal{I}_t$.

\subsection{Compressing and Reconstructing Checkpoints}
\label{ssec:comp-recon}
Based on the methods we describe on the above sections, we here give a detailed introduction of our compressing and reconstructing process. During the whole training process, once we reach a saving node, we start our compression process independent of the main training process. We always keep a reconstructed version of the last checkpoint, for fast compression and training recovery. With this reconstructed version, the compressing process is described in Algorithm~\ref{alg:1}. 

\begin{figure}[t]
\vspace{-3mm}
\begin{algorithm}[H]
\caption{Compressing process}
\label{alg:1}
\begin{algorithmic} 
\REQUIRE last reconstructed weight checkpoint $\hat{\mathcal{W}}_{t-1}$, original weight checkpoint $\mathcal{W}_{t}$, original optimizer checkpoint $\mathcal{O}_t$
\STATE $ \Delta \mathcal{W}_t \leftarrow \mathcal{W}_t - \hat{\mathcal{W}}_{t-1}$
\STATE $\Delta \mathcal{W}^{*}_t, \mathcal{O}^*_t \leftarrow \rm joint\_prune(\Delta \mathcal{W}_t, \mathcal{O}_t)$
\STATE $\mathcal{I}^{\mathcal{W}}_t, \mathcal{C}^{\mathcal{W}}_t \leftarrow \rm quantize(\Delta \mathcal{W}^{*}_t)$
\STATE $\mathcal{I}^{\mathcal{O}}_t, \mathcal{C}^{\mathcal{O}}_t \leftarrow \rm quantize(\Delta \mathcal{O}^{*}_t)$
\STATE $\mathcal{P}'_t \leftarrow 7zip(\mathcal{I}^{\mathcal{W}}_t, \mathcal{C}^{\mathcal{W}}_t, \mathcal{I}^{\mathcal{O}}_t, \mathcal{C}^{\mathcal{O}}_t)$
\STATE $\rm save\ \mathcal{P}'_t$
\end{algorithmic}
\end{algorithm}
\vspace{-9mm}
\end{figure}

In which $\mathcal{I}$ and $\mathcal{C}$ represent the index and clustering center of non-uniform quantization, respectively. After we finish the compression of iteration $t$, we can either keep the original checkpoint  $\mathcal{W}_{t}$ or reconstruct the weight checkpoint $\hat{\mathcal{W}}_{t}$ for the compression of next checkpoint or fast resumption from training crashed. We delete the other checkpoints $\hat{\mathcal{W}}_{i}, i \le t-1$ to save the storage.

\begin{figure}[t]
	\vspace{-3mm}
\begin{algorithm}[H]
\caption{Reconstructing process}
\label{alg:2}
\begin{algorithmic} 
\REQUIRE last reconstructed weight checkpoint $\hat{\mathcal{W}}_{t-1}$, compressed checkpoint $\mathcal{P}_{t}$
\STATE $\mathcal{I}^{\mathcal{W}}_t, \mathcal{C}^{\mathcal{W}}_t, \mathcal{I}^{\mathcal{O}}_t, \mathcal{C}^{\mathcal{O}}_t \leftarrow \rm unzip(\mathcal{P}_t)$
\STATE $\Delta \mathcal{W}^{Q*}_t \leftarrow \mathcal{C}^{\mathcal{W}}_t[\mathcal{I}^{\mathcal{W}}_t]$
\STATE $\hat{\mathcal{O}}_t \leftarrow \mathcal{C}^{\mathcal{O}}_t[\mathcal{I}^{\mathcal{O}}_t]$
\STATE $\hat{\mathcal{W}}_{t} \leftarrow \hat{\mathcal{W}}_{t-1}/\mathcal{W}_{t-1} + \Delta \mathcal{W}^{Q*}_t$
\end{algorithmic}
\end{algorithm}
\vspace{-8mm}
\end{figure}

We can reconstruct a weight checkpoint $\hat{\mathcal{W}}_t$ from last saved/reconstructed weight checkpoint $\mathcal{W}_{t-1}$/$\hat{\mathcal{W}}_{t-1}$ and our saved compressed checkpoint $\mathcal{P}_t$ following the Algorithm~\ref{alg:2}.

\begin{figure}[t]
\vspace{-3mm}
\begin{algorithm}[H]
\caption{Reconstructing arbitrary checkpoints}
\label{alg:3}
\begin{algorithmic} 
\REQUIRE random seed $s$, compressed checkpoints $\mathcal{P}_{i}$, required iterations $t$
\STATE $\hat{\mathcal{W}}_0 = \rm init(s)$
\WHILE{$i < t$}
\STATE $\hat{\mathcal{W}}_i \leftarrow \rm recon(\hat{\mathcal{W}}_{i-1}, \mathcal{P}_{i})$
\ENDWHILE
\end{algorithmic}
\end{algorithm}
\vspace{-11mm}
\end{figure}

Once we finished the whole training process, only the random seed for initialize weights and the compressed checkpoints are required to be saved, which are significantly smaller than the whole weights and optimizer checkpoint. If we want to reconstruct arbitrary checkpoints, we can follow the Algorithm~\ref{alg:3}.

\section{Experiments}
\label{sec:exp}

\begin{table*}[t]
	\vspace{-10pt}
	\caption{Results of ViT-L32 model on ImageNet-1K dataset. CR represents the compression ratio. M2W pruning represents the Momentum-to-Weights pruning shown in equation~\ref{eqa:rw}. No check means that the weights are pruned with a setting threshold instead of $\frac{\alpha}{\sqrt{m_t}}$. And W2M pruning represents the Weights-to-Momentum pruning shown in equation~\ref{eqa:ro}. No check means that $\mathcal{M}_o(i) = \mathds{1}_{v_{t}(i) > r_o}$. * We estimate the results of the other work in terms of the checkpoint size of momentum being twice the weights.}
	\centering
	\begin{tabular}{@{}c|cc|ccc@{}}
		\toprule
		Method     & M2W pruning & W2M pruning & Top-1 Accuracy(\%) & CR(Weights)   & CR(Weights \& Momentum)  \\ \midrule
		baseline   &             &             & 71.36        & 1     & 1     \\
		CNR+       &             &             & 71.57        & 7.82  & 1.41*  \\
		QD+        &             &             & 71.24        & 16.31 & 1.45*  \\
		DynaQuant  &             &             & 71.82        & 26.19 & 1.47*  \\ \midrule
		ExCP(Ours) &             &             &      71.51        &  -     &  19.88     \\
		ExCP(Ours) &  \checkmark           &             &      71.53        &    -   &  25.54     \\
		ExCP(Ours) &             &  \checkmark          &       71.80       &  -     &    22.76   \\
		ExCP(Ours) &  \checkmark           &  \checkmark           & 71.69        & -     & 35.21 \\ \bottomrule
	\end{tabular}
	\label{table:vit}
\end{table*}

\begin{table*}[th]
	\vspace{-8pt}
	\caption{Results of Pythia-410M models. We achieve almost lossless results while the storage is reduce by $\sim$70$\times$.}
	\centering
	\begin{tabular}{c|c|c|ccccccc}
		\toprule
		\multirow{2}{*}{Model}       & \multirow{2}{*}{Method} & \multirow{2}{*}{Size} & \multicolumn{7}{c}{Tasks}                                   \\ \cmidrule(lr){4-10} 
		&                         &                       & hellaswag & arc-e &  piqa & C3 & csl & lambada & Avg \\ \midrule
		\multirow{3}{*}{Pythia-410M} & Original model          &         4.53G              &     \textbf{32.52}    &  35.80     &    \textbf{62.13}    &  \textbf{37.21}  &  \textbf{53.75}   &    37.22     &   43.11  \\
		& Residual+7Zip            &        3.40G           &    \textbf{32.52}    &  35.80     &   \textbf{62.13}    &  \textbf{37.21}  &  \textbf{53.75}   &    37.22    &  43.11  \\	
		& \textbf{ExCP (Ours)}                 &     \textbf{0.06G}                  &    31.95     &   \textbf{37.04}   & 62.62 &   36.22 & 52.50 & \textbf{37.24} &  42.93 \\ %
		\bottomrule
	\end{tabular}
	\label{table:res}
	\vspace{-3mm}
\end{table*}

\begin{table*}[t]
	
	\vspace{-10pt}
	\caption{Results of PanGu-$\pi$ series models. We achieve almost lossless results while the storage is reduced by $\sim25\times$.}
	\setlength{\tabcolsep}{2.15pt}
	\small
	\centering
	\begin{tabular}{@{}ccccccccccccccc@{}}
		\toprule
		\multicolumn{2}{c}{\multirow{2}{*}{Model}} & \multirow{2}{*}{Size} & \multicolumn{4}{c}{Examination}   & Knowledge & \multicolumn{2}{c}{Reasoning} & \multicolumn{4}{c}{Understanding} & \multirow{2}{*}{Avg} \\ \cmidrule(lr){4-14}
		\multicolumn{2}{c}{}                       &                       & C-Eval & CMMLU & MMLU  & AGI-Eval & BoolQ     & AX-b          & PIQA          & CSL    & EPRSTMT  & XSum  & LCSTS &                          \\ \midrule
		\multirow{2}{*}{PanGu-$\pi$-1B}      & Ori       & 14.64G                 & 38.05  & 37.86 & 34.96 & 30.42    & 58.62     & 43.75         & 62.02         & 56.25  & 55.62    & 16.00 & 14.60 & 40.74                    \\
		& Ours      & 0.59G                 & 36.71  & 38.65 & 37.13 & 31.87    & 59.30     & 42.66         & 61.10         & 55.00  & 56.25    & 16.31 & 14.14 & 40.83                    \\ \midrule
		\multirow{2}{*}{PanGu-$\pi$-7B}      & Ori       & 98.59G                & 59.91  & 59.97 & 61.84 & 54.04    & 64.59     & 56.88         & 77.31         & 63.12  & 90.00    & 19.59 & 16.61 & 56.71                    \\
		& Ours      & 4.10G                 & 61.32  & 60.14 & 62.37 & 55.11    & 68.44     & 52.90         & 77.91         & 63.75  & 90.00    & 19.24 & 16.77 & 57.09                    \\ \bottomrule
	\end{tabular}
	\label{table:pangu}
\end{table*}

\begin{figure*}[th]
	\centering
	\vspace{-5pt}
	\includegraphics[width=6.6in]{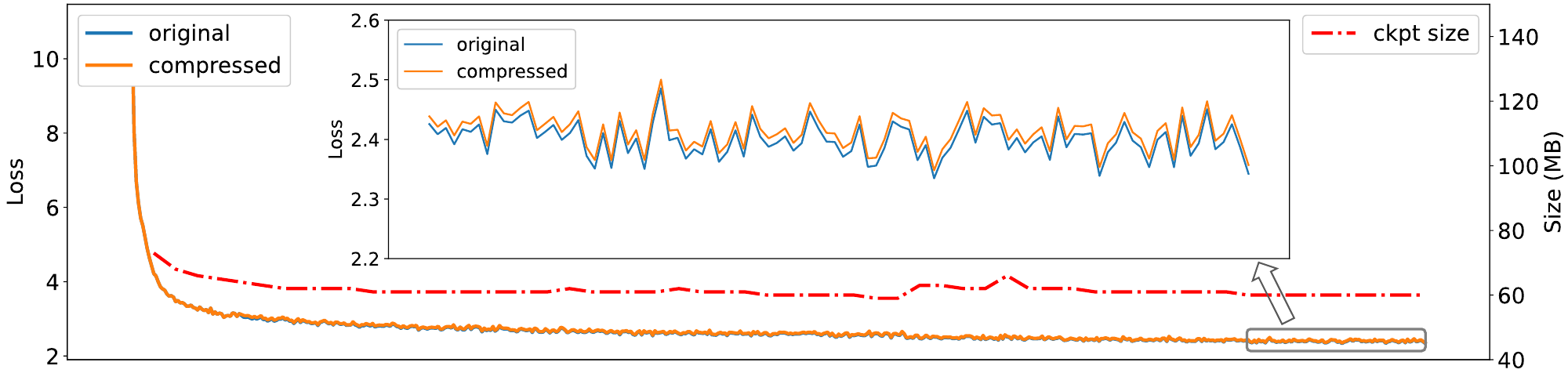}
	\vspace{-10pt}
	\caption{Comparison of training loss and checkpoint size between original models and our methods.}
	\vspace{-10pt}
	\label{fig:training}
\end{figure*}

\subsection{Models and Datasets}
\label{ssec:eval-data}
We conduct our experiments on ViT-L32~\cite{dosovitskiy2020image}, Pythia-410M~\cite{biderman2023pythia}, PanGu-$\pi$-1B and PanGu-$\pi$-7B~\cite{wang2023pangu} models. The ViT-L32 model is trained and evaluated on ImageNet-1K dataset. As for LLMs, we train Pythia-410M on on a subset of the standard Pile~\cite{gao2020pile} dataset. We used about 1/20 of the 300B tokens for this experiment. The PanGu-$\pi$ series models are trained following the training details of their paper~\cite{wang2023pangu}, which are trained on about 1.6 trillion tokens.

For evaluation, we use opencompass~\cite{2023opencompass} as the evaluation framework. We choose HellaSwag, ARC-easy, PIQA, C3, CSL and LAMBADA tasks to evaluate the performance of Pythia-410M, since these evaluation benchmarks are sensitive to model performance. And we evaluate the PanGu-$\pi$ series models following their paper~\cite{wang2023pangu} which uses 11 benchmarks belonging to 4 categories, examination, knowledge, reasoning and understanding.

\subsection{Implementation Details}
\label{ssec:impl-detail}
During the training process, we save checkpoints every epoch for ViT-L32 and every 1000 iterations for LLMs, and compress the checkpoints in the meantime. We break our training process periodically, and then we resume the training from our compressed checkpoints until we finish the whole training process.

Unless otherwise specified, we set the $\alpha$ in equation~\ref{eqa:rw} and $\beta$ in equation~\ref{eqa:ro} as $5e-5$ and $2.0$ in our experiments, respectively. The weights except zero are non-uniformly quantized to $2^n-1$ clustering center while the value zero occupies one center. And the bit number $n$ is set as 4 in experiments. We combine two int4 number into one int8 number while saving. 7zip compression algorithm is used for further storage reduction.

\subsection{Experimental Results}
\label{ssec:exp-res}

First we evaluate the effectiveness of our methods on ViT-L32 models to make a comparison with other checkpoint compression methods shown in Table~\ref{table:vit}. The accuracies of CNR+, QD+ and DynaQuant are all from Dynaquant~\cite{dynaquant}. We follow the same setting as Dynaquant that we break and resume the training process from compressed checkpoints every 15 epochs. The results show that our method achieve better compression ratio which reachs more than 30$\times$. Moreover, previous work do not compress the momentum of optimizer, which means that they achieve even less compression ratio while considering both weights and momentum. We also evaluate our joint pruning strategy. Using both M2W and W2M pruning achieve the best results.

Next we evaluate the performance of our methods on a relatively smaller LLM Pythia-410M. The results are shown in Table~\ref{table:res}. The period of breaking is set to 5000 iterations. From the results, we can find that our methods achieve nearly lossless compression with $\sim$70$\times$ compression ratio even the model itself is small enough in LLMs. Besides, our evaluation results on downstream tasks are even a little bit better than that of the original model. We think this is the improvement in generalization ability brought by the introduction of additional regularizers during the compression process. %

The training loss curve and the comparison of storage of each checkpoint during the training process is shown in Figure~\ref{fig:training}. We plot the training loss every 100 iterations. We can find that our training curve fits well with the original one, with only a small loss increase. Meanwhile, we achieve $\sim70\times$ compression in every node. %
When the learning rate is large, the stored checkpoint is relatively large. When the learning rate is small, the model size decreases since the model tends to be stable. This is related to our strategy of compressing residuals.

We also evaluate the PanGu-$\pi$-1B and PanGu-$\pi$-7B models in Table~\ref{table:pangu}. Since we reproduce their results, the original models may differ slightly from the results in their paper. Our compressed models achieve $\sim25\times$ storage reduce with negligible precision loss. Notice that the weights of PanGu-$\pi$ series models are originally saved in float16 format, so the compression ratio would be less than that of Pythia series models.

\begin{figure*}[th]
\begin{center}
	\begin{tcolorbox}[colback=gray!10,%
		colframe=black,%
		width=16cm,%
		arc=1mm, auto outer arc,
		boxrule=0.8pt,
		]
		\textbf{Question}: What kind of exercise do you suggest I do indoors?\\

\textbf{Original PanGu-$\pi$-1B}: If you are looking for a way to do some exercise in the comfort of your own home, there are several great exercises you can try. Some suggestions are:
Squats: This is a great exercise for all levels, from beginner to advanced. You can do them by sitting on a chair with a squat rack at the back. Slowly lower your hips until your thighs are parallel to the ground, then push it back up. 2. Plank: Planks are a great core-strengthening exercise that can help improve your posture and lower your heart rate. Start by getting into a pushup, then a plank, and finally a layover. 3. Mountain climbers: Mountain climbers are a fun and challenging exercise that can help improve your strength and endurance. Start by bringing a heavy object, like a bag or bottle, into a vertical position. Climb the object up to the ceiling, then bring it back down to the ground. 4. Mountain climbers: Mountain climbers are a challenging and exciting exercise that can help improve your upper body strength. Start by bringing a heavy object, like a bag or bottle, into a vertical position. Climb the object up to the ceiling, then bring it back down to the ground. 5. Yoga: Yoga is a great way to improve your flexibility, strength, and balance. There are many online videos and apps that you can use to learn. Remember, it's important to start slowly and gradually increase the intensity of your workouts. If you feel uncomfortable with any of these exercises, you can always turn back to doing them at your own pace.\\

\textbf{Compressed PanGu-$\pi$-1B}: There are a lot of great exercises you can do indoors to help improve your fitness, such as:
1. Yoga: Yoga is a great way to improve flexibility, strength, and balance. There are many online videos and classes available that can help you learn the basics. 2. Bodyweight exercises: You can do bodyweight exercises such as push-ups, squats, lunges, and planks to build strength and improve your overall fitness. 3. Dancing: Dancing is a fun way to improve your cardiovascular fitness, coordination, and balance. There are many online tutorials and classes available that can help you learn the basics. 4. Yoga for beginners: If you're new to yoga, there are many beginner classes available that can help you learn the basics. 5. Pilates: Pilates is a great way to improve your core strength, flexibility, and balance. There are many online videos and classes available that can help you learn the basics. Remember to start slow and gradually increase the intensity and duration of your workouts as you get stronger and more confident.

\end{tcolorbox}
\end{center}
\vspace{-12pt}
\caption{Q\&A example to show the difference between our compressed model and the original model.}
\label{fig:visualize}
\end{figure*}

We show a question \& answer results for PanGu-$\pi$-1B model in Figure~\ref{fig:visualize} to show the difference between our compressed model and the original model. In this example, our compressed model shows a better understanding to the limit \textit{indoors} and gives a better answer. It proves that our compressed model perfroms even better than the original one in some cases. More results are shown in the Appendix~\ref{sec:vis}.

\subsection{Ablation Studies}

\label{ssec:abl}
\begin{table}[t]
\vspace{-5pt}
\caption{Ablation study of our methods. Applying residual, joint-prune and quantization together achieves the best size while the average accuracy is almost lossless.}
\centering
\setlength{\tabcolsep}{9pt}
\begin{tabular}{ccccc}
\toprule
\multicolumn{3}{c}{method}      & \multirow{2}{*}{Size} & \multirow{2}{*}{Avg Acc} \\ \cmidrule(lr){1-3}
residual & prune & quant&                       &                          \\ \midrule
         &       &       &               4070M         &          43.11                \\
    \checkmark     &       &       &       3484M                 &         43.11                 \\
         &   \checkmark    &       &            324M            &             29.95             \\
         &      &    \checkmark    &            492M            &            40.17             \\
   \checkmark      &   \checkmark    &       &         276M              &     42.92                     \\
     \checkmark    &       &   \checkmark    &  493M & \textbf{42.94} \\
      \checkmark   &   \checkmark    &   \checkmark    &          \textbf{61M}             &       42.93                   \\ \bottomrule
\end{tabular}
\vspace{-15pt}
\label{table:abl}
\end{table}

We also do some ablation studies to show that every method in our compression pipeline is of vital importance. The results are shown in Table~\ref{table:abl}. Although calculating the residual of adjacent models cannot bring a significant storage reduce, it plays an important role in the whole pipeline. Directly pruning weights may harm the accuracy largely, while the residual of adjacent models does not have this problem. Joint-pruning and quantization on residual checkpoint separately compress the model by 15$\times$ and 8$\times$, respectively. Combining these two methods brings a large improvement to about 70$\times$.

\begin{table}[t]

\vspace{-5pt}
\caption{Ablation study of different quantization bins. We choose 4 bit in all other experiments since it achieves better performance-size trade-off.}
\vspace{2pt}
\centering
\setlength{\tabcolsep}{12pt}
\begin{tabular}{c|cc}
\toprule
Quant bins & Size & Avg Acc \\ \midrule
2 bit      &  \textbf{43M}    &  42.46       \\
4 bit      &  61M    &  \textbf{42.93}       \\
8 bit      &  87M    &   42.90      \\ \bottomrule
\end{tabular}
\vspace{-10pt}
\label{table:quant-bins}
\end{table}

We explore the influence of different quantization bins. From Table~\ref{table:quant-bins}, we can find that quantization below 4 bit cannot bring a significant storage reduce, so we choose 4 bit which achieves better performance-size trade-off. In some cases which extreme small checkpoint size is required, 2 bit could be used to further compress the checkpoints a little bit more.

\begin{table}[t]
\vspace{-5pt}
\caption{Comparison of different compression algorithms. We choose 7zip in all other experiments since it outperforms other algorithms.}
\centering
\setlength{\tabcolsep}{8pt}
\begin{tabular}{c|ccccc}
\toprule
     & zip  & rar  & rar4 & bz2  & 7z   \\ \midrule
size & 73M & 70M & 69M & 64M & \textbf{61M} \\ \bottomrule
\end{tabular}

\vspace{-15pt}
\label{table:compact}
\end{table}

We also evaluate different compression algorithms to compact the final checkpoint files. The results are shown in Table~\ref{table:compact}. The 7zip compression with LZMA2 algorithm achieves the best compression ratio, which leads to about 20\% less storage, and we apply 7zip with the ultra compression ratio on all other experiments.

\section{Conclusion}
\label{sec:conclusion}
In this paper, we discuss the extreme compression of LLM checkpoint. We propose a checkpoint compression framework which contains residual calculation, weights-momentum joint pruning and non-uniform quantization. We derive the criterion for weight-momentum joint-pruning and prove the convergence of the pruned momentum states. Experimental results show the effectiveness of our methods. We compress Pythia-410M by $\sim70\times$ while achieving nearly lossless results on down-stream evaluations.

In the future, we would try to extend the experiments to different tasks such as multi-modal large models and visual large models. And different types of neural networks including transformers, CNNs and RNNs would be taken into consideration.

\section*{Impact Statement}
This paper presents work whose goal is to advance the field of Machine Learning. There are many potential societal consequences of our work, none which we feel must be specifically highlighted here.
\bibliography{example_paper}

\begin{thebibliography}{32}
\providecommand{\natexlab}[1]{#1}
\providecommand{\url}[1]{\texttt{#1}}
\expandafter\ifx\csname urlstyle\endcsname\relax
  \providecommand{\doi}[1]{doi: #1}\else
  \providecommand{\doi}{doi: \begingroup \urlstyle{rm}\Url}\fi

\bibitem[Agrawal et~al.(2023)Agrawal, Reddy, Bhattamishra, Nookala, Vashishth,
  Rong, and Tumanov]{dynaquant}
Agrawal, A., Reddy, S., Bhattamishra, S., Nookala, V. P.~S., Vashishth, V.,
  Rong, K., and Tumanov, A.
\newblock Dynaquant: Compressing deep learning training checkpoints via dynamic
  quantization.
\newblock \emph{arXiv preprint arXiv:2306.11800}, 2023.

\bibitem[Biderman et~al.(2023)Biderman, Schoelkopf, Anthony, Bradley,
  O’Brien, Hallahan, Khan, Purohit, Prashanth, Raff,
  et~al.]{biderman2023pythia}
Biderman, S., Schoelkopf, H., Anthony, Q.~G., Bradley, H., O’Brien, K.,
  Hallahan, E., Khan, M.~A., Purohit, S., Prashanth, U.~S., Raff, E., et~al.
\newblock Pythia: A suite for analyzing large language models across training
  and scaling.
\newblock In \emph{International Conference on Machine Learning}, pp.\
  2397--2430. PMLR, 2023.

\bibitem[Brown et~al.(2020)Brown, Mann, Ryder, Subbiah, Kaplan, Dhariwal,
  Neelakantan, Shyam, Sastry, Askell, et~al.]{gpt3}
Brown, T., Mann, B., Ryder, N., Subbiah, M., Kaplan, J.~D., Dhariwal, P.,
  Neelakantan, A., Shyam, P., Sastry, G., Askell, A., et~al.
\newblock Language models are few-shot learners.
\newblock \emph{Advances in neural information processing systems},
  33:\penalty0 1877--1901, 2020.

\bibitem[Chen et~al.(2022)Chen, Zhang, and Wang]{chen2022mtp}
Chen, X., Zhang, Y., and Wang, Y.
\newblock Mtp: multi-task pruning for efficient semantic segmentation networks.
\newblock In \emph{2022 IEEE International Conference on Multimedia and Expo
  (ICME)}, pp.\  1--6. IEEE, 2022.

\bibitem[Chen et~al.(2020)Chen, Liu, Ren, and Jin]{chen2020efficient}
Chen, Y., Liu, Z., Ren, B., and Jin, X.
\newblock On efficient constructions of checkpoints.
\newblock \emph{arXiv preprint arXiv:2009.13003}, 2020.

\bibitem[Chowdhery et~al.(2023)Chowdhery, Narang, Devlin, Bosma, Mishra,
  Roberts, Barham, Chung, Sutton, Gehrmann, et~al.]{palm}
Chowdhery, A., Narang, S., Devlin, J., Bosma, M., Mishra, G., Roberts, A.,
  Barham, P., Chung, H.~W., Sutton, C., Gehrmann, S., et~al.
\newblock Palm: Scaling language modeling with pathways.
\newblock \emph{Journal of Machine Learning Research}, 24\penalty0
  (240):\penalty0 1--113, 2023.

\bibitem[Contributors(2023)]{2023opencompass}
Contributors, O.
\newblock Opencompass: A universal evaluation platform for foundation models.
\newblock \url{https://github.com/open-compass/opencompass}, 2023.

\bibitem[D{\'e}fossez et~al.(2020)D{\'e}fossez, Bottou, Bach, and
  Usunier]{defossez2020simple}
D{\'e}fossez, A., Bottou, L., Bach, F., and Usunier, N.
\newblock A simple convergence proof of adam and adagrad.
\newblock \emph{arXiv preprint arXiv:2003.02395}, 2020.

\bibitem[Dettmers et~al.(2022)Dettmers, Lewis, Belkada, and
  Zettlemoyer]{dettmers2022llm}
Dettmers, T., Lewis, M., Belkada, Y., and Zettlemoyer, L.
\newblock Llm. int8 (): 8-bit matrix multiplication for transformers at scale.
\newblock \emph{arXiv preprint arXiv:2208.07339}, 2022.

\bibitem[Dosovitskiy et~al.(2020)Dosovitskiy, Beyer, Kolesnikov, Weissenborn,
  Zhai, Unterthiner, Dehghani, Minderer, Heigold, Gelly,
  et~al.]{dosovitskiy2020image}
Dosovitskiy, A., Beyer, L., Kolesnikov, A., Weissenborn, D., Zhai, X.,
  Unterthiner, T., Dehghani, M., Minderer, M., Heigold, G., Gelly, S., et~al.
\newblock An image is worth 16x16 words: Transformers for image recognition at
  scale.
\newblock \emph{arXiv preprint arXiv:2010.11929}, 2020.

\bibitem[Eisenman et~al.(2022)Eisenman, Matam, Ingram, Mudigere,
  Krishnamoorthi, Nair, Smelyanskiy, and Annavaram]{eisenman2022check}
Eisenman, A., Matam, K.~K., Ingram, S., Mudigere, D., Krishnamoorthi, R., Nair,
  K., Smelyanskiy, M., and Annavaram, M.
\newblock $\{$Check-N-Run$\}$: A checkpointing system for training deep
  learning recommendation models.
\newblock In \emph{19th USENIX Symposium on Networked Systems Design and
  Implementation (NSDI 22)}, pp.\  929--943, 2022.

\bibitem[Esser et~al.(2019)Esser, McKinstry, Bablani, Appuswamy, and
  Modha]{qat}
Esser, S.~K., McKinstry, J.~L., Bablani, D., Appuswamy, R., and Modha, D.~S.
\newblock Learned step size quantization.
\newblock \emph{arXiv preprint arXiv:1902.08153}, 2019.

\bibitem[Gao et~al.(2020)Gao, Biderman, Black, Golding, Hoppe, Foster, Phang,
  He, Thite, Nabeshima, et~al.]{gao2020pile}
Gao, L., Biderman, S., Black, S., Golding, L., Hoppe, T., Foster, C., Phang,
  J., He, H., Thite, A., Nabeshima, N., et~al.
\newblock The pile: An 800gb dataset of diverse text for language modeling.
\newblock \emph{arXiv preprint arXiv:2101.00027}, 2020.

\bibitem[Han et~al.(2015)Han, Mao, and Dally]{han2015deep}
Han, S., Mao, H., and Dally, W.~J.
\newblock Deep compression: Compressing deep neural networks with pruning,
  trained quantization and huffman coding.
\newblock \emph{arXiv preprint arXiv:1510.00149}, 2015.

\bibitem[Hu et~al.(2020)Hu, Zou, Xia, Jin, Tao, Liu, Zhang, and
  Zhang]{hu2020delta}
Hu, Z., Zou, X., Xia, W., Jin, S., Tao, D., Liu, Y., Zhang, W., and Zhang, Z.
\newblock Delta-dnn: Efficiently compressing deep neural networks via
  exploiting floats similarity.
\newblock In \emph{Proceedings of the 49th International Conference on Parallel
  Processing}, pp.\  1--12, 2020.

\bibitem[Huffman(1952)]{huffman_coding}
Huffman, D.~A.
\newblock A method for the construction of minimum-redundancy codes.
\newblock \emph{Proceedings of the IRE}, 40\penalty0 (9):\penalty0 1098--1101,
  1952.

\bibitem[Jin et~al.(2023)Jin, Wu, Zhang, Zou, Jin, Tao, Liao, and
  Xia]{jin2023design}
Jin, H., Wu, D., Zhang, S., Zou, X., Jin, S., Tao, D., Liao, Q., and Xia, W.
\newblock Design of a quantization-based dnn delta compression framework for
  model snapshots and federated learning.
\newblock \emph{IEEE Transactions on Parallel and Distributed Systems},
  34\penalty0 (3):\penalty0 923--937, 2023.

\bibitem[Kingma \& Ba(2014)Kingma and Ba]{kingma2014adam}
Kingma, D.~P. and Ba, J.
\newblock Adam: A method for stochastic optimization.
\newblock \emph{arXiv preprint arXiv:1412.6980}, 2014.

\bibitem[Liu et~al.(2017)Liu, Li, Shen, Huang, Yan, and Zhang]{networkslimming}
Liu, Z., Li, J., Shen, Z., Huang, G., Yan, S., and Zhang, C.
\newblock Learning efficient convolutional networks through network slimming.
\newblock In \emph{Proceedings of the IEEE international conference on computer
  vision}, pp.\  2736--2744, 2017.

\bibitem[Rao \& Hwang(1996)Rao and Hwang]{coding_techniques}
Rao, K.~R. and Hwang, J.~J.
\newblock \emph{Techniques and standards for image, video, and audio coding}.
\newblock Prentice-Hall, Inc., 1996.

\bibitem[Sashank et~al.(2018)Sashank, Satyen, and
  Sanjiv]{sashank2018convergence}
Sashank, J.~R., Satyen, K., and Sanjiv, K.
\newblock On the convergence of adam and beyond.
\newblock In \emph{International conference on learning representations},
  volume~5, 2018.

\bibitem[Shi \& Li(2021)Shi and Li]{shi2021rmsprop}
Shi, N. and Li, D.
\newblock Rmsprop converges with proper hyperparameter.
\newblock In \emph{International conference on learning representation}, 2021.

\bibitem[Shu et~al.(2023)Shu, Li, Tang, Zhang, Chen, Li, Wang, and
  Chen]{shu2023tinysam}
Shu, H., Li, W., Tang, Y., Zhang, Y., Chen, Y., Li, H., Wang, Y., and Chen, X.
\newblock Tinysam: Pushing the envelope for efficient segment anything model.
\newblock \emph{arXiv preprint arXiv:2312.13789}, 2023.

\bibitem[Sterne(2012)]{mp3}
Sterne, J.
\newblock \emph{MP3: The meaning of a format}.
\newblock Duke University Press, 2012.

\bibitem[Tang et~al.(2020)Tang, Wang, Xu, Tao, Xu, Xu, and Xu]{tang2020scop}
Tang, Y., Wang, Y., Xu, Y., Tao, D., Xu, C., Xu, C., and Xu, C.
\newblock Scop: Scientific control for reliable neural network pruning.
\newblock \emph{Advances in Neural Information Processing Systems},
  33:\penalty0 10936--10947, 2020.

\bibitem[Team et~al.(2023)Team, Anil, Borgeaud, Wu, Alayrac, Yu, Soricut,
  Schalkwyk, Dai, Hauth, et~al.]{team2023gemini}
Team, G., Anil, R., Borgeaud, S., Wu, Y., Alayrac, J.-B., Yu, J., Soricut, R.,
  Schalkwyk, J., Dai, A.~M., Hauth, A., et~al.
\newblock Gemini: a family of highly capable multimodal models.
\newblock \emph{arXiv preprint arXiv:2312.11805}, 2023.

\bibitem[Touvron et~al.(2023)Touvron, Lavril, Izacard, Martinet, Lachaux,
  Lacroix, Rozi{\`e}re, Goyal, Hambro, Azhar, et~al.]{llama}
Touvron, H., Lavril, T., Izacard, G., Martinet, X., Lachaux, M.-A., Lacroix,
  T., Rozi{\`e}re, B., Goyal, N., Hambro, E., Azhar, F., et~al.
\newblock Llama: Open and efficient foundation language models.
\newblock \emph{arXiv preprint arXiv:2302.13971}, 2023.

\bibitem[Wang et~al.()Wang, Chen, Tang, Guo, Han, Nie, Wang, Hu, Bai, Wang,
  et~al.]{wangpangu}
Wang, Y., Chen, H., Tang, Y., Guo, T., Han, K., Nie, Y., Wang, X., Hu, H., Bai,
  Z., Wang, Y., et~al.
\newblock Pangu-$\pi$: Enhancing language model architectures via nonlinearity
  compensation.

\bibitem[Wang et~al.(2023)Wang, Chen, Tang, Guo, Han, Nie, Wang, Hu, Bai, Wang,
  et~al.]{wang2023pangu}
Wang, Y., Chen, H., Tang, Y., Guo, T., Han, K., Nie, Y., Wang, X., Hu, H., Bai,
  Z., Wang, Y., et~al.
\newblock Pangu--$\pi$: Enhancing language model architectures via nonlinearity
  compensation.
\newblock \emph{arXiv preprint arXiv:2312.17276}, 2023.

\bibitem[Wu et~al.(2023)Wu, Zeng, Wang, Wang, and Chen]{wu2023ppt}
Wu, X., Zeng, F., Wang, X., Wang, Y., and Chen, X.
\newblock Ppt: Token pruning and pooling for efficient vision transformers.
\newblock \emph{arXiv preprint arXiv:2310.01812}, 2023.

\bibitem[Xiao et~al.(2023)Xiao, Lin, Seznec, Wu, Demouth, and
  Han]{xiao2023smoothquant}
Xiao, G., Lin, J., Seznec, M., Wu, H., Demouth, J., and Han, S.
\newblock Smoothquant: Accurate and efficient post-training quantization for
  large language models.
\newblock In \emph{International Conference on Machine Learning}, pp.\
  38087--38099. PMLR, 2023.

\bibitem[Zhang et~al.(2022)Zhang, Chen, Shi, Sun, and Luo]{zhang2022adam}
Zhang, Y., Chen, C., Shi, N., Sun, R., and Luo, Z.-Q.
\newblock Adam can converge without any modification on update rules.
\newblock \emph{Advances in neural information processing systems},
  35:\penalty0 28386--28399, 2022.

\end{thebibliography}
\bibliographystyle{icml2024}

\newpage
\appendix
\onecolumn

\section{Convergence Analysis.}
\label{app:conv}
\begin{theorem}\label{theorem}
According the convergence analysis in Adam~\cite{kingma2014adam}, assume that the function $f_t$ has bounded gradients, $\left\|\nabla f_t(\theta)\right\|_2 \leq G,\left\|\nabla f_t(\theta)\right\|_{\infty} \leq$ $G_{\infty}$ for all $\theta \in R^d$ and distance between any $\theta_t$ generated by Adam is bounded, $\left\|\theta_n-\theta_m\right\|_2 \leq D$, $\left\|\theta_m-\theta_n\right\|_{\infty} \leq D_{\infty}$ for any $m, n \in\{1, \ldots, T\}$, and $\beta_1, \beta_2 \in[0,1)$ satisfy $\frac{\beta_1^2}{\sqrt{\beta_2}}<1$. Let $\alpha_t=\frac{\alpha}{\sqrt{t}}$ and $\beta_{1, t}=\beta_1 \lambda^{t-1}, \lambda \in(0,1)$. If we prune the moments with a mask $\mathcal{M}_o\in \{0,1\}$ at iteration $\tau$, 
Adam could also achieves the following guarantee, for all $T \geq 1$.
\begin{equation}
	\small
	\begin{aligned}\label{eq2}
		\tilde R(T) 
		\leq & \frac{D^2}{2 \alpha\left(1-\beta_1\right)} \sum_{i=1}^d \sqrt{T \widehat{v}_{T, i}}
		+\frac{\alpha\left(1+\beta_1\right) G_{\infty}}{\left(1-\beta_1\right) \sqrt{1-\beta_2}(1-\gamma)^2} \sum_{i=1}^d\left\|g_{1: T, i}\right\|_2 
		+\frac{D_{\infty}^2 G_{\infty} \sqrt{1-\beta_2}}{2 \alpha} \sum_{i=1}^d \sum_{t=1}^t \frac{\beta_{1, t}}{\left(1-\beta_{1, t}\right)} \sqrt{t}\\
		& +\frac{D^2}{2 \alpha\left(1-\beta_1\right)} \sum_{i=1}^d (\sqrt{T \widehat{v}_{\tau, i}} - \sqrt{T \widehat{v}_{\tau, i}\mathcal{M}_o})
	\end{aligned}
\end{equation}
\end{theorem}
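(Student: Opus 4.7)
The plan is to reuse the original regret analysis of Adam (Kingma and Ba, Theorem~4.1) essentially verbatim, and track the single point of divergence caused by the one-time pruning at iteration $\tau$. Because the Adam update is untouched for every $t\neq\tau$, the convexity step, the bound on $\sum_t \alpha_t \hat m_{t,i}^2/\sqrt{\hat v_{t,i}}$ (which produces the $\|g_{1:T,i}\|_2$ term), and the $\beta_{1,t}$ auxiliary bound (which produces the $\sum_t \beta_{1,t}\sqrt t$ term) all go through unchanged. Only the telescoping sum that collapses the coefficients $\sqrt{\hat v_{t,i}}/\alpha_t$ acquires an extra jump at $t=\tau$, and that jump, after scaling, is exactly the fourth summand advertised in the theorem.

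\textbf{Key steps.} First, start from convexity of $f_t$ to write $f_t(\theta_t)-f_t(\theta^*)\le\langle g_t,\theta_t-\theta^*\rangle$, and substitute the Adam update to express the inner product as a telescoping piece of the form $(\|\theta_t-\theta^*\|_2^2-\|\theta_{t+1}-\theta^*\|_2^2)\sqrt{\hat v_{t,i}}/(2\alpha_t)$ plus a residual controlled by $\hat m_{t,i}^2/\sqrt{\hat v_{t,i}}$, following the same algebra as in the Adam proof. Second, apply Young's inequality together with the two auxiliary lemmas from Kingma and Ba to extract the middle two summands of the bound. Third, carry out the telescoping over the scaling coefficients: split the range $\{1,\ldots,T\}$ into $\{1,\ldots,\tau-1\}$, $\{\tau\}$, and $\{\tau+1,\ldots,T\}$, use the pruned value $\hat v_\tau\mathcal{M}_o$ as the initial condition for the third range, and observe that the boundary mismatch at $t=\tau$ contributes precisely $(1/\alpha_\tau)\sum_i\bigl(\sqrt{\hat v_{\tau,i}}-\sqrt{\hat v_{\tau,i}\mathcal{M}_o(i)}\bigr)$. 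Fourth, upper-bound $1/\alpha_\tau\le\sqrt T/\alpha$ (valid since $\tau\le T$) and multiply by $D^2/(2(1-\beta_1))$ using the bounded-distance hypothesis $\|\theta_m-\theta_n\|_2\le D$, which recovers the stated correction $\frac{D^2}{2\alpha(1-\beta_1)}\sum_i\bigl(\sqrt{T\hat v_{\tau,i}}-\sqrt{T\hat v_{\tau,i}\mathcal{M}_o}\bigr)$.

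\textbf{Main obstacle.} The delicate step is handling the telescoping across the pruning instant: the Kingma-Ba proof uses the monotonicity of $t\mapsto\sqrt{\hat v_{t,i}}/\alpha_t$ only implicitly through successive cancellations, and zeroing entries of $v_\tau$ breaks this implicit monotonicity at exactly one time step. I expect the real work to lie in carefully splitting the telescoping sum at $t=\tau$ and verifying that the entries with $\mathcal{M}_o(i)=0$ contribute a non-negative surplus (which becomes the correction) while entries with $\mathcal{M}_o(i)=1$ contribute nothing; the remaining pieces of the proof then proceed with $\hat v_\tau\mathcal{M}_o$ playing the role of the ``initial'' second moment for iterations $t>\tau$, without further modification. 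Once the regret bound is established, dividing by $T$ and letting $T\to\infty$ immediately recovers the $O(1/\sqrt T)$ average-regret rate quoted in the main text, since the added term grows only as $\sqrt T$.
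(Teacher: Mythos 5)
Your proposal matches the paper's own argument: the paper likewise reuses Kingma--Ba's intermediate regret inequality unchanged except at iteration $\tau$, splits the sum over $\{1,\dots,\tau-1\}$, $\{\tau\}$, $\{\tau+1,\dots,T\}$ with $\widehat{v}_{\tau}\mathcal{M}_o$ in place of $\widehat{v}_{\tau}$, and absorbs the boundary mismatch into the extra term $\frac{D^2}{2\alpha(1-\beta_1)}\sum_i\bigl(\sqrt{T\widehat{v}_{\tau,i}}-\sqrt{T\widehat{v}_{\tau,i}\mathcal{M}_o}\bigr)$ exactly as you describe (your explicit $1/\alpha_\tau\le\sqrt{T}/\alpha$ step is left implicit in the paper). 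So the proposal is correct and takes essentially the same route, and your remark on the monotonicity of $\sqrt{\widehat{v}_{t,i}}/\alpha_t$ is the same caveat the paper itself acknowledges via the Reddi et al.\ discussion.
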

\begin{proof}
According the convergence analysis in Adam~\cite{kingma2014adam}, assume that the function $f_t$ has bounded gradients, $\left\|\nabla f_t(\theta)\right\|_2 \leq G,\left\|\nabla f_t(\theta)\right\|_{\infty} \leq$ $G_{\infty}$ for all $\theta \in R^d$ and distance between any $\theta_t$ generated by Adam is bounded, $\left\|\theta_n-\theta_m\right\|_2 \leq D$, $\left\|\theta_m-\theta_n\right\|_{\infty} \leq D_{\infty}$ for any $m, n \in\{1, \ldots, T\}$, and $\beta_1, \beta_2 \in[0,1)$ satisfy $\frac{\beta_1^2}{\sqrt{\beta_2}}<1$. Let $\alpha_t=\frac{\alpha}{\sqrt{t}}$ and $\beta_{1, t}=\beta_1 \lambda^{t-1}, \lambda \in(0,1)$. Adam achieves the following guarantee, for all $T \geq 1$.
\begin{equation}
	R(T) \leq \frac{D^2}{2 \alpha\left(1-\beta_1\right)} \sum_{i=1}^d \sqrt{T \widehat{v}_{T, i}}+\frac{\alpha\left(1+\beta_1\right) G_{\infty}}{\left(1-\beta_1\right) \sqrt{1-\beta_2}(1-\gamma)^2} \sum_{i=1}^d\left\|g_{1: T, i}\right\|_2+\sum_{i=1}^d \frac{D_{\infty}^2 G_{\infty} \sqrt{1-\beta_2}}{2 \alpha \beta_1(1-\lambda)^2},
\end{equation}
where $R(T)$ is the regret:
\begin{equation}\label{eq:regret}
	R(T) = \sum_{t=1}^{T} [f_t(\theta_t)-f_t(\theta^*)]
\end{equation}
This theorem could be obtained by the following:
\begin{equation}
	\begin{aligned}\label{eq1}
		R(T) \leq & \sum_{i=1}^d \frac{1}{2 \alpha_1\left(1-\beta_1\right)}\left(\theta_{1, i}-\theta_{, i}^*\right)^2 \sqrt{\widehat{v}_{1, i}}+\sum_{i=1}^d \sum_{t=2}^T \frac{1}{2\left(1-\beta_1\right)}\left(\theta_{t, i}-\theta_{, i}^*\right)^2\left(\frac{\sqrt{\widehat{v}_{t, i}}}{\alpha_t}-\frac{\sqrt{\widehat{v}_{t-1, i}}}{\alpha_{t-1}}\right) \\
		& +\frac{\beta_1 \alpha G_{\infty}}{\left(1-\beta_1\right) \sqrt{1-\beta_2}(1-\gamma)^2} \sum_{i=1}^d\left\|g_{1: T, i}\right\|_2+\frac{\alpha G_{\infty}}{\left(1-\beta_1\right) \sqrt{1-\beta_2}(1-\gamma)^2} \sum_{i=1}^d\left\|g_{1: T, i}\right\|_2 \\
		& +\sum_{i=1}^d \sum_{t=1}^T \frac{\beta_{1, t}}{2 \alpha_t\left(1-\beta_{1, t}\right)}\left(\theta_{, i}^*-\theta_{t, i}\right)^2 \sqrt{\widehat{v}_{t, i}}
	\end{aligned}
\end{equation}

\begin{equation}
	\begin{aligned}\label{eq2}
		R(T) \leq & \frac{D^2}{2 \alpha\left(1-\beta_1\right)} \sum_{i=1}^d \sqrt{T \widehat{v}_{T, i}}+\frac{\alpha\left(1+\beta_1\right) G_{\infty}}{\left(1-\beta_1\right) \sqrt{1-\beta_2}(1-\gamma)^2} \sum_{i=1}^d\left\|g_{1: T, i}\right\|_2+\frac{D_{\infty}^2}{2 \alpha} \sum_{i=1}^d \sum_{t=1}^t \frac{\beta_{1, t}}{\left(1-\beta_{1, t}\right)} \sqrt{t \widehat{v}_{t, i}} \\
		\leq & \frac{D^2}{2 \alpha\left(1-\beta_1\right)} \sum_{i=1}^d \sqrt{T \widehat{v}_{T, i}}+\frac{\alpha\left(1+\beta_1\right) G_{\infty}}{\left(1-\beta_1\right) \sqrt{1-\beta_2}(1-\gamma)^2} \sum_{i=1}^d\left\|g_{1: T, i}\right\|_2 \\
		& +\frac{D_{\infty}^2 G_{\infty} \sqrt{1-\beta_2}}{2 \alpha} \sum_{i=1}^d \sum_{t=1}^t \frac{\beta_{1, t}}{\left(1-\beta_{1, t}\right)} \sqrt{t}
	\end{aligned}
\end{equation}

In our method, we prune some variables for the momentum, \ie, a mask $\mathcal{M}_o$ is applied for $m$ and $v$. Assume that we prune the momentum at iteration $\tau$, the convergence is the same as original optimization process for iteration $1$ to iteration $\tau$. However, the first and second moment vectors $v_{\tau}$ and $m_{\tau}$ become $v_{\tau}\mathcal{M}_o$ and $m_{\tau}\mathcal{M}_o$ at iteration $\tau$.

From Eq.~\ref{eq1}, we have:
\begin{equation}
	\begin{aligned}
		\tilde R(T) \leq & \sum_{i=1}^d \frac{1}{2 \alpha_1\left(1-\beta_1\right)}\left(\theta_{1, i}-\theta_{, i}^*\right)^2 \sqrt{\widehat{v}_{1, i}}+\sum_{i=1}^d \sum_{t=2}^{\tau-1} \frac{1}{2\left(1-\beta_1\right)}\left(\theta_{t, i}-\theta_{, i}^*\right)^2\left(\frac{\sqrt{\widehat{v}_{t, i}}}{\alpha_t}-\frac{\sqrt{\widehat{v}_{t-1, i}}}{\alpha_{t-1}}\right) \\
		&\sum_{i=1}^d \frac{1}{2\left(1-\beta_1\right)}\left(\theta_{\tau, i}-\theta_{, i}^*\right)^2\left(\frac{\sqrt{\mathcal{M}_o\widehat{v}_{\tau, i}}}{\alpha_\tau}-\frac{\sqrt{\widehat{v}_{\tau-1, i}}}{\alpha_{\tau-1}}\right)+\sum_{i=1}^d \sum_{t={\tau+1}}^{T} \frac{1}{2\left(1-\beta_1\right)}\left(\theta_{t, i}-\theta_{, i}^*\right)^2\left(\frac{\sqrt{\widehat{v}_{t, i}}}{\alpha_t}-\frac{\sqrt{\widehat{v}_{t-1, i}}}{\alpha_{t-1}}\right) \\
		& +\frac{\beta_1 \alpha G_{\infty}}{\left(1-\beta_1\right) \sqrt{1-\beta_2}(1-\gamma)^2} \sum_{i=1}^d\left\|g_{1: T, i}\right\|_2+\frac{\alpha G_{\infty}}{\left(1-\beta_1\right) \sqrt{1-\beta_2}(1-\gamma)^2} \sum_{i=1}^d\left\|g_{1: T, i}\right\|_2 \\
		& +\sum_{i=1}^d \sum_{t=1}^T \frac{\beta_{1, t}}{2 \alpha_t\left(1-\beta_{1, t}\right)}\left(\theta_{, i}^*-\theta_{t, i}\right)^2 \sqrt{\widehat{v}_{t, i}}
	\end{aligned}
\end{equation}
Under similar assumption as Eq.~\ref{eq2}, we could have the following regret bound:
\begin{equation}
	\begin{aligned}\label{eq2}
		\tilde R(T) 
		\leq & \frac{D^2}{2 \alpha\left(1-\beta_1\right)} \sum_{i=1}^d \sqrt{T \widehat{v}_{T, i}}+\frac{\alpha\left(1+\beta_1\right) G_{\infty}}{\left(1-\beta_1\right) \sqrt{1-\beta_2}(1-\gamma)^2} \sum_{i=1}^d\left\|g_{1: T, i}\right\|_2 
		+\frac{D_{\infty}^2 G_{\infty} \sqrt{1-\beta_2}}{2 \alpha} \sum_{i=1}^d \sum_{t=1}^t \frac{\beta_{1, t}}{\left(1-\beta_{1, t}\right)} \sqrt{t}\\
		& +\frac{D^2}{2 \alpha\left(1-\beta_1\right)} \sum_{i=1}^d (\sqrt{T \widehat{v}_{\tau, i}} - \sqrt{T \widehat{v}_{\tau, i}\mathcal{M}_o})
	\end{aligned}
\end{equation}
\end{proof}

\section{More Visualization of Compressed LLM.}
\label{sec:vis}
Here we show more visualization results in Figure~\ref{fig:visualize2} and Figure~\ref{fig:visualize3}.

\begin{figure*}[h]
	\begin{center}
		\begin{tcolorbox}[colback=gray!10,%
			colframe=black,%
			width=16cm,%
			arc=1mm, auto outer arc,
			boxrule=0.8pt,
			]
			\textbf{Question}: What is the area of a circle with a radius of two meters?\\
			
			\textbf{Original PanGu-$\pi$-1B}: The formula for the area of a circle is A = $\pi r^2$, where r is the radius.\\
			Substituting r = 2 meters, we get:\\
			$A = \pi(2)^2$\\
			$A = 4\pi$ square meters\\
			Therefore, the area of a circle with a radius of 2 meters is approximately 4$\pi$ square meters or approximately 12.57 square meters.\\
			
			\textbf{Compressed PanGu-$\pi$-1B}: The formula for the area of a circle is $A = \pi r^2$, where r is the radius.
			To find the area of a circle with a radius of two meters, we can use the formula:\\
			$A = \pi(2)^2$\\
			$A = \pi(4)$\\
			$A \approx 12.57$ square meters\\
			So, the area of a circle with a radius of two meters is approximately 12.57 square meters. \\
			
			\textbf{Question}: What are the seven continents on Earth?\\
			
			\textbf{Original PanGu-$\pi$-1B}: The seven continents are: 1. Asia 2. Africa 3. North America 4. South America 5. Antarctica 6. Europe 7. Australia/Oceania\\
			
			\textbf{Compressed PanGu-$\pi$-1B}: The seven continents are: 1. Asia 2. Africa 3. North America 4. South America 5. Antarctica 6. Europe 7. Australia
		\end{tcolorbox}
	\end{center}
	\vspace{-12pt}
	\caption{Q\&A example to show the difference between our compressed model and the original model.}
	\label{fig:visualize2}
\end{figure*}

\begin{figure*}[t]
	\begin{center}
		\begin{tcolorbox}[colback=gray!10,%
			colframe=black,%
			width=16cm,%
			arc=1mm, auto outer arc,
			boxrule=0.8pt,
			]
			\textbf{Question}: How to find a suitable picnic spot?\\
			\textbf{Original PanGu-$\pi$-1B}: Finding a picnic spot can be a great way to unwind and enjoy a meal with friends or family. Here are a few tips to help you find the perfect picnic spot:
1. Look for an outdoor space: Whether it's a park, a beach, or a forest, finding an outdoor space to set out a picnic can be a great way to enjoy the outdoors.
2. Check for shady areas: A picnic should be a fun and enjoyable experience, so make sure to check for shady areas where people might be hiding or eating.
3. Look for a busy area: If you're picnicking during the day, look for an area with a lot of people gathered, like a park or mall park.
4. Check for views: A great picnic spot should have a great view, so think of a location that overlooks a large body of water or a park entrance.
5. Look for a comfortable spot: A picnic should be a relaxing experience, so think of a location that is comfortable and easy to lie down or sleep in.
6. Take into account seasonal weather: If you're picnicking in the spring or fall, look for a location that is shaded during those months, as those times can be pleasant.
7. Ask locals: If you're trying to find a picnic spot, don't be afraid to ask locals. They may have some of the best spots for picnicking and enjoying a good picnic.\\
			\textbf{Compressed PanGu-$\pi$-1B}: There are several ways to find a suitable picnic spot:
1. Ask for recommendations: Ask family and friends for recommendations on a suitable spot for the picnic. They may know a place that they have used or that they think is a good choice.
2. Check online: Check online for suggestions on picnic spots in your area. Many websites have a section for finding suitable spots for picnics, or you can search for them by using search engines.
3. Look for public spaces: Look for parks, gardens, or other public spaces that are often free of trees and open spaces for picnics.
4. Check with local authorities: Contact your local government to see if they have any upcoming events or festivals that require a picnic. They may be able to provide you with some recommendations on a suitable spot.
5. Ask for a description: Ask the host or hostess at the restaurant to provide a description of the picnic spot, including the number of people, the area, and the amenities available.
Remember to take necessary precautions to ensure the safety of yourself and others while enjoying a picnic.\\
			\textbf{Question}: How to take beautiful landscape photos?\\
			\textbf{Original PanGu-$\pi$-1B}: 1. Understand your subject: landscape photos generally require some understanding of the subject in order to accurately capture its beauty. Study the subject, including its culture, history, and natural environment, to be able to anticipate its behavior and lighting conditions.
			2. Use the right lighting: Landscape photography often requires the use of artificial lighting to illuminate the subject. Experiment with different lighting techniques, such as natural lighting, studio lighting, or table lamp lighting, to create the desired effect.
			3. Adjust camera settings: Most cameras have settings that can either adjust depth of field, aperture, and ISO. Experiment with these settings to find the right balance of light and depth, and to get the desired exposure ratio, depth, and color.
			4. Take multiple shots: Taking multiple photos from different angles can help to create a more dynamic and visually interesting landscape. Take advantage of vantage points, such as bridges, buildings, and mountains, to increase the subject's exposure and clarity.
			5. Add foreground elements: Adding a foreground element, such as trees, mountains, or buildings, can help to add depth and context to your photo. Use the rule of thirds to create a pleasing balance between the foreground and background.
			6. Edit your photos: Once you've captured your landscape photos, take the time to edit your photos. You can adjust brightness, contrast, and saturation, and add filters or editing tools to enhance the subject.
			7. Practice, practice, practice: The more you practice taking landscape photos, the better you'll get at capturing the beauty of the landscape. Keep experimenting with different lighting, camera settings, and editing techniques to keep your photos looking their best.\\
			\textbf{Compressed PanGu-$\pi$-1B}: 1. Understand your environment: Be sure to study your chosen location, the time and weather, and the type of subject you are photographing.
			2. Choose the right camera: Not all cameras are created equal, so take the time to research and choose the camera that best suits your needs.
			3. Get the right lighting: The lighting can dramatically affect the mood and tone of your photos. Try to capture the light sources that are most effective for your subject.
			4. Experiment with composition: Composition is an essential part of photography, so experiment with different angles, vantage points, and compositions to see what works best for your subject.
			5. Edit your photos: Once you have your compelling landscape photos, it's time to edit them. You can use software to add filters, adjust brightness and contrast, and enhance the overall look of your photos.
			6. Share your photos: Take a camera and post your photos on social media platforms like Instagram, Facebook, and YouTube. This will allow others to view and appreciate your work, and also help you gain more followers.
		\end{tcolorbox}
	\end{center}
	\vspace{-12pt}
	\caption{Q\&A example to show the difference between our compressed model and the original model.}
	\label{fig:visualize3}
\end{figure*}

\end{CJK}
\end{document}